\documentclass[pdfusetitle]{article}

\PassOptionsToPackage{numbers}{natbib}
\usepackage[final]{neurips_2023}

\usepackage{graphicx} %
\usepackage{mathtools}
\usepackage{amsthm}
\usepackage{bm}
\usepackage[utf8]{inputenc} %
\usepackage[T1]{fontenc}    %
\usepackage{url}            %
\usepackage{booktabs}       %
\usepackage{amsfonts}       %
\usepackage{nicefrac}       %
\usepackage{microtype}      %
\usepackage[table]{xcolor}         %
\usepackage{float}
\usepackage{wrapfig}
\usepackage{algorithm2e}
\usepackage{rotating}
\usepackage{subcaption}
\usepackage{setspace}
\usepackage{bookmark}
\usepackage{hyperref}

\usepackage{tikz}
\usetikzlibrary{shapes, backgrounds, intersections}
\newcommand{\rom}[1]{%
  \textup{\uppercase\expandafter{\romannumeral#1}}%
}

\newtheorem*{fact}{Property}
\newtheorem*{proposition}{Proposition}

\title{Add and Thin: Diffusion for Temporal Point Processes}
\author{%
  \texorpdfstring{David Lüdke\textsuperscript{\ensuremath{1,2}} \quad Marin Biloš\textsuperscript{\ensuremath{1,3}}\quad Oleksandr Shchur\textsuperscript{\ensuremath{1,4,\dagger}}\\ \textbf{Marten Lienen}\textsuperscript{\ensuremath{1,2}} \quad \textbf{Stephan Günnemann}\textsuperscript{\ensuremath{1,2}}\vspace{.2cm} \\
  \textsuperscript{\ensuremath{1}}{School of Computation, Information and Technology, Technical University of Munich, Germany} \\ 
  \textsuperscript{\ensuremath{2}}{Munich Data Science Institute, Technical University of Munich, Germany}\\
  \textsuperscript{\ensuremath{3}}{Machine Learning Research, Morgan Stanley, United States}\\
  \textsuperscript{\ensuremath{4}}{Amazon Web Services, Germany}\\
  \{\texttt{d.luedke,m.bilos,o.shchur,m.lienen,s.guennemann}\}\texttt{@tum.de}}{David Lüdke, Marin Biloš, Oleksandr Shchur, Marten Lienen, Stephan Günnemann}
}

\newcommand{\Appref}[1]{Appendix~\ref{#1}}
\newcommand{\Tabref}[1]{Table~\ref{#1}}

\def\numPoints{{K}}
\def\diffusionSteps{{N}}
\def\model{{p_\vtheta}}
\def\diff{\mathop{}\!\mathrm{d}}

\def\modelname{\textsc{Add-Thin}}

\def\Figref#1{Figure~\ref{#1}}

\def\Secref#1{Section~\ref{#1}}

\def\eqref#1{Eq.~\ref{#1}}
\def\Eqref#1{Equation~\ref{#1}}

\def\1{\bm{1}}

\def\vtheta{{\bm{\theta}}}

\def\vc{{\bm{c}}}

\def\ve{{\bm{e}}}

\def\vh{{\bm{h}}}

\def\vt{{\bm{t}}}

\def\vx{{\bm{x}}}

\DeclareMathAlphabet{\mathsfit}{\encodingdefault}{\sfdefault}{m}{sl}
\SetMathAlphabet{\mathsfit}{bold}{\encodingdefault}{\sfdefault}{bx}{n}

\newcommand\extrafootertext[1]{%
    \bgroup%
    \renewcommand\thefootnote{\fnsymbol{footnote}}%
    \renewcommand\thempfootnote{\fnsymbol{mpfootnote}}%
    \footnotetext[0]{#1}%
    \egroup%
}
\usepackage{enumitem}
\setlist[itemize]{leftmargin=30pt,rightmargin=25pt}

\hypersetup{
  colorlinks,
  linkcolor={red!50!black},
  citecolor={blue!60!green},
  urlcolor={blue!60!green}
}

\begin{document}

\maketitle

\begin{abstract}
\looseness=-1
Autoregressive neural networks within the temporal point process (TPP) framework have become the standard for modeling continuous-time event data.
Even though these models can expressively capture event sequences in a one-step-ahead fashion, they are inherently limited for long-term forecasting applications due to the accumulation of errors caused by their sequential nature.
To overcome these limitations, we derive \modelname{}, a principled probabilistic denoising diffusion model for TPPs that operates on entire event sequences.
Unlike existing diffusion approaches, \modelname{} naturally handles data with discrete and continuous components.
In experiments on synthetic and real-world datasets, our model matches the state-of-the-art TPP models in density estimation and strongly outperforms them in forecasting.\extrafootertext{Code is available at \url{https://www.cs.cit.tum.de/daml/add-thin}}

\end{abstract}

\extrafootertext{\hspace{-0.14cm}\textsuperscript{\ensuremath{\dagger}}Work done while at the Technical University of Munich.}
\section{Introduction}\label{sec:intro}
\begin{wrapfigure}[20]{r}{0.53\textwidth}
    \vspace{-.8cm}
    \centering
      \includegraphics[width=\linewidth, trim={6.6cm 2cm 6.9cm 1.5cm},clip]{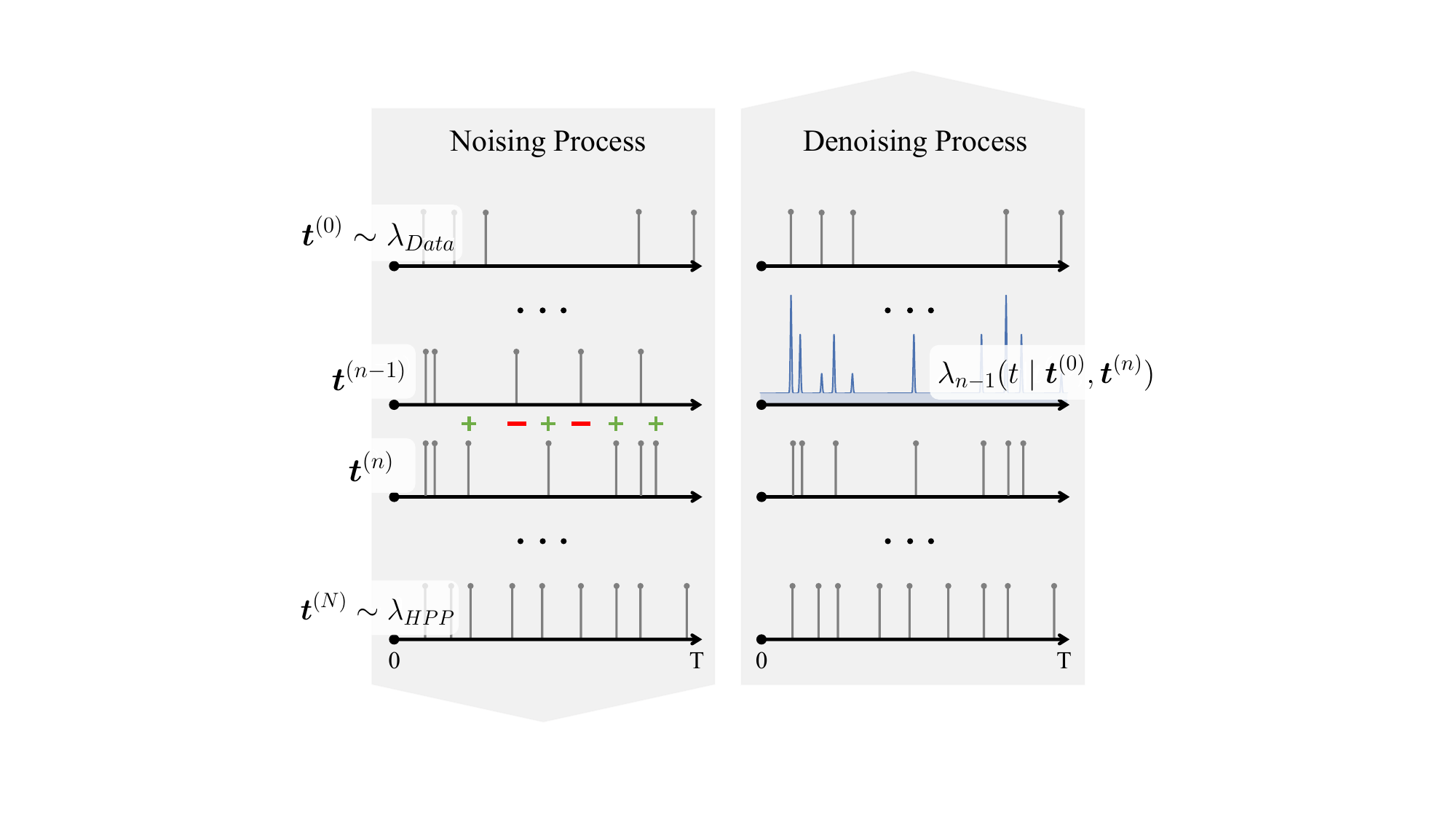}
      \vspace{-0.37cm}
      \caption{Proposed noising and denoising process for \modelname. \textit{(Left)} Going from step $n-1$ to step $n$, we add and remove some points at random. \textit{(Right)} Given \smash{$\vt^{(n)}$} and \smash{$\vt^{(0)}$} we know the intensity of points at step $n-1$. We approximate this intensity with our model, which enables sampling new sequences.}\label{figure1}
\end{wrapfigure}
Many machine learning applications involve the analysis of continuous-time data, where the number of events and their times are random variables. 
This data type arises in various domains, including healthcare, neuroscience, finance, social media, and seismology.
Temporal point processes (TPPs) provide a  sound mathematical framework to model such event sequences, where the main problem is finding a parametrization that can capture the seasonality and complex interactions (e.g., excitation and inhibition) within point processes.

\looseness=-1
Traditional TPP models \cite{hawkes1971spectra,isham1979self} employ simple parametric forms, limiting their flexibility to capture the intricacies of arbitrary TPPs. 
In recent years, various neural TPPs have been proposed (see \cite{shchur2021neural} for an overview) that capture complex event interactions in an autoregressive manner, often using recurrent neural networks (RNN) or transformer architectures.
While autoregressive models are expressive and have shown good performance for \emph{one-step-ahead} prediction, their suitability for forecasting remains limited due to the accumulation of errors in sequential sampling.

\looseness=-1
We propose to take a completely different approach: instead of autoregressive modeling, we apply a generative diffusion model, which iteratively refines entire event sequences from noise.
Diffusion models \cite{ho2020denoising,sohl2015deep} have recently shown impressive results for different data domains, such as images \cite{han2022card,ho2020denoising,li2022srdiff}, point clouds \cite{luo2021diffusion,lyu2021conditional}, text \cite{li2022diffusion} and time-series \cite{alcaraz2022diffusion,bilos2023diffusion,kollovieh2023predict,tashiro2021csdi}.
But how can we apply diffusion models to TPPs?
We cannot straightforwardly apply existing Gaussian diffusion models to learn the mapping between TPPs due to the particular requirements that must be met, namely, the randomness in the number of events and the strictly positive arrival times.

\looseness=-1
We present a novel diffusion-inspired model for TPPs that allows sampling entire event sequences at once without relying on a specific choice of a parametric distribution.
Instead, our model learns the probabilistic mapping from complete noise, i.e., a homogeneous Poisson process (HPP), to data.
More specifically, we learn a model to reverse our noising process of adding (superposition) and removing (thinning) points from the event sequence by matching the conditional inhomogeneous denoising intensity \smash{$\lambda_{n-1}(t \mid \vt^{(0)}, \vt^{(n)})$} as presented in \Figref{figure1}.
Thereby, we achieve a natural way to generate sequences with varying numbers of events and expressively model arbitrary TPPs.
In short, our contributions are as follows:
\begin{itemize}
    \item We connect diffusion models with TPPs by introducing a novel model that naturally handles the discrete and continuous nature of point processes. 
    \item We propose a model that is flexible and permits parallel and closed-form sampling of entire event sequences, overcoming common limitations of autoregressive models.
    \item We show that our model matches the performance of state-of-the-art TPP models in density estimation and outperforms them in forecasting.
\end{itemize}

\section{Background}\label{sec:background}

\subsection{Temporal point processes (TPPs)}
\looseness=-1
Temporal point processes (TPPs) \cite{daley2006introduction,daley2007introduction} are stochastic processes that define a probability distribution over event sequences whose number of points (events) $\numPoints$ and their locations (arrival times) $t_i$ are random.
A realization of a TPP can be represented as a sequence of strictly increasing arrival times: \smash{$\vt = (t_1, \dots, t_\numPoints)$},  \smash{$0 < t_1 < \dots < t_\numPoints \leq T$}.
Viewing a TPP as a counting process, we can equivalently represent a TPP realization by a counting measure \smash{$N(t) = \sum_i^K \mathbb{I}(t_i \le t)$}, for \smash{$t \in [0, T]$}.
The intensity characterizing a TPP can be interpreted as the expected number of events per unit of time and is defined as:
\begin{align}
    \lambda(t \mid \mathcal{H}_{t}) = \lim\limits_{\Delta t\downarrow 0} \frac{\mathbb{E}[N(t+\Delta t)-N(t) \mid \mathcal{H}_{t}]}{\Delta t},
\end{align}
where $\mathcal{H}_{t} = \{t_i: t_i < t\}$ is the event history until time $t$, which acts as a filtration to the process.

TPPs have a number of convenient theoretical properties, two of which will be central to our derivation of a noising process for TPPs later in the paper.
The first property is \emph{superposition}: If we combine events generated by TPPs with intensities \smash{$\lambda_1(t)$} and \smash{$\lambda_2(t)$}, the resulting event sequence will again follow a TPP, but now with intensity \smash{$\lambda_1(t) + \lambda_2(t)$}.
Conversely, randomly removing each event generated by a TPP process with intensity \smash{$\lambda(t)$} with probability $p$ is known as independent \emph{thinning}.
This is equivalent to sampling from a TPP with intensity \smash{$(1 - p) \lambda(t)$} \cite{daley2007introduction}. 

\paragraph{Poisson process.} A \emph{(in)homogeneous} Poisson process is the simplest class of TPPs, where the rate of event occurrence is independent of the history. 
Then the number of points on $[0, T]$ follows a Poisson distribution with rate $\smash{\Lambda(T) = \int_0^T \lambda(t) \diff t}$.
In the context of our model, the Poisson process with a constant intensity on $[0, T]$, called \emph{homogeneous} Poisson Process (HPP), represents the noise distribution.
Even though Poisson processes can model seasonality, i.e., time-varying rates of event occurrence, they assume the events to be independent and do not capture the exciting or inhibiting behavior present in the real world, e.g., a large earthquake increasing the likelihood of observing other earthquakes soon after.

\paragraph{Conditional intensity.}
Most TPP models leverage the \emph{conditional intensity} function $\lambda(t \mid \mathcal{H}_{t})$ or equivalently the conditional density $p(t \mid \mathcal{H}_{t})$ to overcome the independence of points limitation of an \emph{inhomogeneous} Poisson process.
Historically, these intensity models were parameterized using hand-crafted functions \cite{hawkes1971spectra,isham1979self}, whereas now, it is more common to use neural networks for learning intensities from data \cite{du2016recurrent,neuralhawkes,shchur2019intensity}.
While the conditional intensity provides a general framework to model TPPs, sampling from these models is inherently autoregressive.

\subsection{Denoising diffusion probabilistic models } 
\looseness=-1
Diffusion models \cite{ho2020denoising,sohl2015deep} are a class of latent variable models that learn a generative model to reverse a fixed probabilistic noising process \smash{$\vx_{0} \to \vx_{1} \to \dots \to \vx_N$}, which gradually adds noise to clean data \smash{$\vx_0$} until no information remains, i.e., \smash{$\vx_{N}\sim p(\vx_N)$}. 
For continuous data, the forward (noising) process is usually defined as a fixed Markov chain \smash{$q(\vx_{n} \mid \vx_{n-1})$} with Gaussian transitions.
Then the Markov chain of the reverse process is captured by approximating the true posterior \smash{$q(\vx_{n-1}\mid \vx_{0}, \vx_{n})$} with a model \smash{$p_{\theta}(\vx_{n-1}\mid\vx_{n})$}.
Ultimately, sampling new realizations $\vx_0$ from the modeled data distribution \smash{$p_{\theta}(\vx_{0}) = \int p(\vx_{N}) \prod^N_{n=1} p_{\theta}(\vx_{n-1}\mid \vx_{n}) \diff \vx_{1}\dots\vx_{N}$} is performed by starting with a sample from pure noise \smash{$\vx_{N}\sim p(\vx_N)$} and gradually denoising it with the learned model over $N$ steps \smash{$\vx_{N} \to \vx_{N-1} \to \dots \to \vx_0$}.

\section{\modelname{}}\label{sec:method}
In the following, we derive a diffusion-like model for TPPs---\modelname{}. 
The two main components of this model are the forward process, which converts data to noise (noising), and the reverse process, which converts noise to data (denoising).
We want to emphasize again that existing Gaussian diffusion models \cite{ho2020denoising,sohl2015deep} are not suited to model entire event sequences, given that the number of events is random and the arrival times are strictly positive.
For this reason, we will derive a new noising and denoising process (Sections~\ref{sec:forward_process} \& \ref{sec:reverse_process}), present a learnable parametrization and appropriate loss to approximate the posterior (\Secref{sec:training}) and introduce a sampling procedure (Sections~\ref{sec:sampling} \&  \ref{sec:cond_sampling}).

\subsection{Forward process -- Noising}\label{sec:forward_process}
Let \smash{$\vt^{(0)} = (t_1, \dots, t_\numPoints)$} denote an i.i.d.\ sample from a TPP (data process)  with $T=1$ specified by the unknown (conditional) intensity \smash{$\lambda_0$}.
We define a \emph{forward} noising process as a sequence of TPPs that start with the true intensity \smash{$\lambda_0$} and converge to a standard HPP, i.e., \smash{$\lambda_0 \to \lambda_1 \to \dots \to \lambda_\diffusionSteps$}:
\begin{align}\label{eq:forward_step}
    \lambda_n(t) = 
    \underbrace{\alpha_n \lambda_{n-1}(t)}_{\text{(i) Thin}} +
    \underbrace{(1 - \alpha_n) \lambda_{\mathrm{HPP}}
    }_{\text{(ii) Add}} ,
\end{align}
where \smash{$1 > \alpha_1 > \alpha_2 > \dots > \alpha_\diffusionSteps > 0$} and \smash{$\lambda_{\mathrm{HPP}}$} denotes the constant intensity of an HPP. \Eqref{eq:forward_step} corresponds to a superposition of (i) a process \smash{$\lambda_{n-1}$} thinned with probability \smash{$1 - \alpha_n$} (removing old points), and (ii) an HPP with intensity \smash{$(1 - \alpha_n) \lambda_{\mathrm{HPP}}$} (adding new points).

\begin{fact}[Stationary intensity] For any starting intensity $\lambda_{0}$, the intensity function $\lambda_{N}$ given by \Eqref{eq:forward_step} converges towards $\lambda_{\mathrm{HPP}}$. That is, the noised TPP will be an HPP with $\lambda_{\mathrm{HPP}}$.
\end{fact}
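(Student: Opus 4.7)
The plan is to exploit the affine structure of the recursion in Eq.~\ref{eq:forward_step} by subtracting its fixed point. I would rewrite the update as
\begin{equation*}
\lambda_n(t) - \lambda_{\mathrm{HPP}} = \alpha_n\bigl(\lambda_{n-1}(t) - \lambda_{\mathrm{HPP}}\bigr),
\end{equation*}
which follows immediately after cancelling the $(1-\alpha_n)\lambda_{\mathrm{HPP}}$ term on the right-hand side, and then iterate from $n=1$ to $n=N$ to obtain the closed form
\begin{equation*}
\lambda_N(t) - \lambda_{\mathrm{HPP}} = \Bigl(\prod_{k=1}^N \alpha_k\Bigr)\bigl(\lambda_0(t) - \lambda_{\mathrm{HPP}}\bigr).
\end{equation*}
Because each $\alpha_k \in (0,1)$ and the schedule is chosen so that $\prod_{k=1}^N \alpha_k \to 0$ as $N \to \infty$ (the direct analogue of $\bar{\alpha}_N \to 0$ in Gaussian diffusion), the prefactor vanishes and $\lambda_N(t) \to \lambda_{\mathrm{HPP}}$ pointwise. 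Since the limiting intensity is a constant independent of the history $\mathcal{H}_t$, the noised TPP is an HPP with rate $\lambda_{\mathrm{HPP}}$, as claimed.

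To corroborate this at the process level without invoking uniqueness of a TPP given its conditional intensity, I would give a constructive argument using the superposition and thinning properties recalled in Section~\ref{sec:background}. After $N$ steps, every remaining point falls into one of two classes: either it survives from the original sequence after $N$ rounds of independent thinning, with overall survival probability $\prod_{k=1}^N \alpha_k$, or it was injected by the HPP noise at some step $n$ with rate $(1-\alpha_n)\lambda_{\mathrm{HPP}}$ and then survived the remaining thinnings with probability $\prod_{k=n+1}^N \alpha_k$. Since thinning an HPP produces an HPP and superposing independent HPPs yields an HPP, the noise-derived component is itself an HPP with intensity
\begin{equation*}
\sum_{n=1}^N (1-\alpha_n)\,\lambda_{\mathrm{HPP}}\prod_{k=n+1}^N \alpha_k \;=\; \Bigl(1 - \prod_{k=1}^N \alpha_k\Bigr)\lambda_{\mathrm{HPP}},
\end{equation*}
by a standard telescoping identity (verifiable by induction on $N$). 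The data-derived component, meanwhile, contributes an expected number of points proportional to $\prod_{k=1}^N \alpha_k$, which vanishes in the limit. Combining these two observations gives a pure HPP with intensity $\lambda_{\mathrm{HPP}}$ and matches the intensity-level calculation above.

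The only real subtlety is the hypothesis on the schedule: the stated monotonicity $1 > \alpha_1 > \cdots > \alpha_N > 0$ alone does not force $\prod_k \alpha_k \to 0$, so I would make this vanishing an explicit assumption on the schedule, which is the standard requirement in diffusion models and holds whenever the $\alpha_k$ stay uniformly bounded away from $1$. Beyond that, the argument is a one-line geometric collapse, and I do not expect any further technical obstacle.
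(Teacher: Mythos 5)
Your argument is correct and essentially the paper's own: the fixed-point subtraction yields exactly the closed form $\lambda_n(t) = \bar{\alpha}_n \lambda_0(t) + (1-\bar{\alpha}_n)\lambda_{\mathrm{HPP}}$ that \Appref{app:derivation_forward_direct} establishes by induction, after which both proofs conclude by letting $\bar{\alpha}_N \to 0$. Your caveat that monotonicity of the $\alpha_n$ alone does not force $\prod_k \alpha_k \to 0$ is fair --- the paper implicitly assumes this of the schedule (and the cosine schedule used in practice satisfies it) --- and your superposition/thinning telescoping argument is a nice process-level corroboration that the paper does not spell out but that is consistent with its forward-sampling discussion.
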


\begin{proof}
In \Appref{app:derivation_forward_direct} we show that, given \smash{$\lambda_0$} and \Eqref{eq:forward_step}, \smash{$\lambda_n$} is given by:
\begin{align}\label{eq:forward_direct}
    \lambda_n(t) = \bar{\alpha}_n \lambda_0(t) + (1 - \bar{\alpha}_n) \lambda_{\mathrm{HPP}} ,
\end{align}

where $\smash{\bar{\alpha}_n = \prod_j^n \alpha_j}$. Since $\smash{\prod_j^\diffusionSteps \alpha_j \to 0}$ as $\diffusionSteps \to \infty$, thus \smash{$\lambda_\diffusionSteps(t) \to \lambda_{\mathrm{HPP}}$}.
\end{proof}

If all $\alpha_n$ are close to $1$, each consecutive realization will be close to the one before because we do not remove a lot of original points, nor do we add many new points. And if we have enough steps, we will almost surely converge to the HPP. Both of these properties will be very useful in training a generative model that iteratively reverses the forward noising process.

\looseness=-1
But how can we sample points from $\lambda_n$ if we do not have access to $\lambda_0$? Since we know the event sequence $\smash{\vt^{(0)}}$ comes from a true process which is specified with $\lambda_0$, we can sample from a thinned process $\bar\alpha_n \lambda_0(t)$, by thinning the points $\smash{\vt^{(0)}}$ independently with probability $1 - \bar\alpha_n$. This shows that even though we cannot access $\lambda_0$, we can sample from $\smash{\lambda_n}$ by simply thinning $\smash{\vt^{(0)}}$ and adding new points from an HPP.

To recap, given a clean sequence $\vt^{(0)}$, we obtain progressively noisier samples $\vt^{(n)}$ by both removing original points from $\vt^{(0)}$ and adding new points at random locations. After $\diffusionSteps$ steps, we reach a sample corresponding to an HPP---containing no information about the original data.

\subsection{Reverse process -- Denoising}\label{sec:reverse_process}

\begin{figure}[t]
  \centering
  \makebox[\textwidth][c]{\includegraphics[width=1.\linewidth, trim={0.1cm 5.5cm 2.1cm 3.cm},clip]{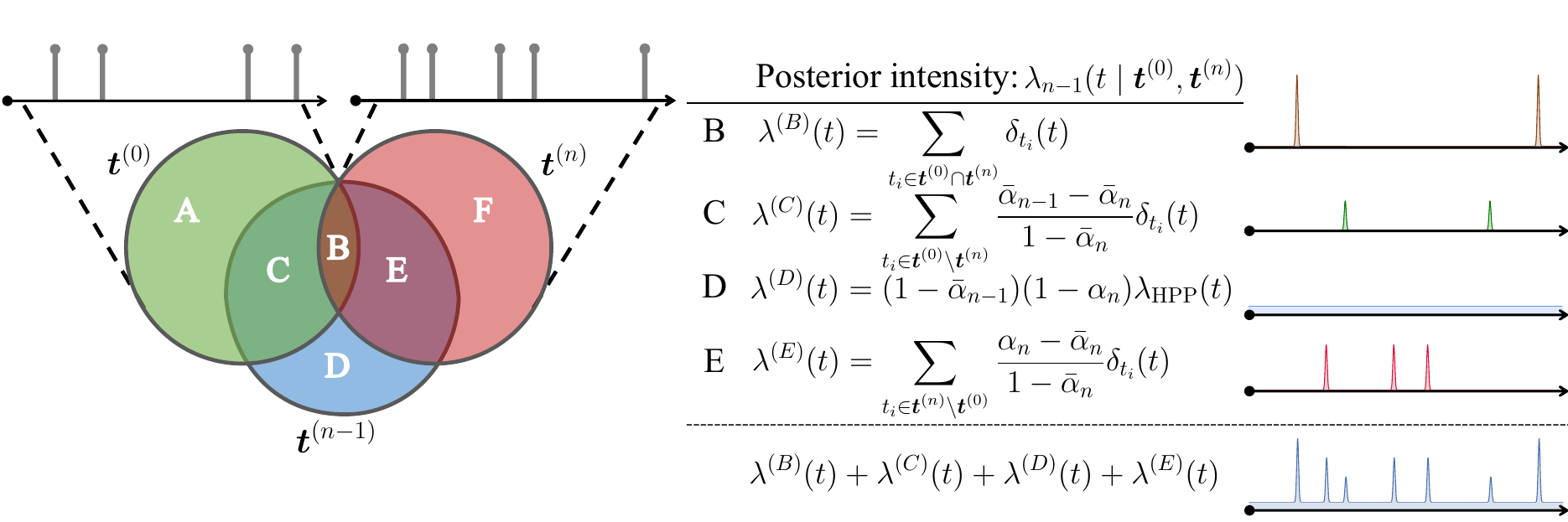}}
  \vspace{-0.48cm}
  \caption{\textit{(Left)} Illustration of all possible disjoint sets that we can reach in our forward process going from $\vt^{(0)}$ to $\vt^{(n)}$ through $\vt^{(n-1)}$. \textit{(Right)} Posterior intensity describing the distribution of $\vt^{(n-1)} \mid \vt^{(0)}, \vt^{(n)}$, where each subset \textbf{B}-\textbf{E} can be generated by sampling from the intensity functions.} %
  \label{fig:posterior}
\end{figure}
To sample realizations \smash{$\vt\sim\lambda_0$} starting from \smash{$\vt^{(N)}\sim\lambda_{HPP}$}, we need to learn to reverse the Markov chain of the forward process, i.e., \smash{$\lambda_\diffusionSteps \to \dots \to \lambda_0$}, or equivalently \smash{$\vt^{(\diffusionSteps)} \to \dots \to \vt^{(0)}$}. Conditioned on \smash{$\vt^{(0)}$}, the reverse process at step $n$ is given by the posterior \smash{$q(\vt^{(n-1)}\mid\vt^{(0)},\vt^{(n)})$}, which is an inhomogeneous Poisson process for the chosen forward process (\Secref{sec:forward_process}). Therefore, the posterior can be represented by a history-independent intensity function \smash{$\lambda_{n-1}(t\mid\vt^{(0)},\vt^{(n)})$}. 

As the forward process is defined by adding and thinning event sequences, the points in the random sequence \smash{$\vt^{(n-1)}$} can be decomposed into disjoint sets of points based on whether they are also in \smash{$\vt^{(0)}$} or \smash{$\vt^{(n)}$}. We distinguish the following cases: points in \smash{$\vt^{(n-1)}$} that were kept from $0$ to $n$ \smash{(\textbf{B})}, points in \smash{$\vt^{(n-1)}$}, that were kept from $0$ to $n-1$ but thinned at the $n$-th step \smash{(\textbf{C})}, added points in \smash{$\vt^{(n-1)}$} that are thinned in the $n$-th step \smash{(\textbf{D})} and added points in \smash{$\vt^{(n-1)}$} that are kept in the $n$-th step \smash{(\textbf{E})}. Since the sets \textbf{B}-\textbf{E} are disjoint, the posterior intensity is a superposition of the intensities of each subsets of \smash{$\vt^{(n-1)}$: $\lambda_{n-1}(t\mid\vt^{(0)},\vt^{(n)})=\lambda^{(B)}(t)+\lambda^{(C)}(t)+\lambda^{(D)}(t)+\lambda^{(E)}(t)$}. 

To derive the intensity functions for cases \textbf{B}-\textbf{E}, we additionally define the following helper sets: \textbf{A} the points \smash{$\vt^{(0)}\setminus\vt^{(n-1)}$} that were thinned until $n-1$ and \smash{\textbf{F}} the points \smash{$\vt^{(n)}\setminus\vt^{(n-1)}$} that have been added at step $n$. The full case distinction and derived intensities are further illustrated in \Figref{fig:posterior}. In the following paragraphs, we derive the intensity functions for cases \textbf{B}-\textbf{E}:

\underline{\textbf{Case B:}} The set \smash{$\vt^{(B)}$} can be formally defined as \smash{$\vt^{(0)}\cap\vt^{(n)}$} since \smash{$(\vt^{(0)}\cap\vt^{(n)})\setminus\vt^{(n-1)}=\emptyset$} almost surely. 
This is because adding points at any of the locations \smash{$t\in\vt^{(0)}\cap\vt^{(n)}$} carries zero measure at every noising step. 
Hence, given \smash{$\vt^{(0)}\cap\vt^{(n)}$} the intensity can be written as a sum of Dirac measures: $\lambda^{(B)}(t)=\sum_{t_i\in(\vt^{(0)}\cap\vt^{(n)})}\delta_{t_i}(t)$.
Similar to how the forward process generated \smash{$\vt^{(B)}$} by preserving some points from \smash{$\vt^{(0)}$}, sampling from the reverse process preserves points from \smash{$\vt^{(n)}$}.

\underline{\textbf{Case C:}} Given \smash{$\vt^{(A\cup C)}=\vt^{(0)}\setminus\vt^{(n)}$}, \smash{$\vt^{(C)}$} can be found by thinning and consists of points that were kept by step $n-1$ and removed at step $n$. Using the thinning of Equations~\ref{eq:forward_step} and \ref{eq:forward_direct}, we know the probability of a point from \smash{$\vt^{(0)}$} being in \smash{$\vt^{(C)}$} and \smash{$\vt^{(A\cup C)}$} is \smash{$\bar\alpha_{n-1}(1-\alpha_n)$} and \smash{$1-\bar\alpha_{n}$}, respectively. 
Since we already know \smash{$\vt^{(B)}$} we can consider the probability of finding a point in \smash{$\vt^{(C)}$}, given \smash{$\vt^{(A\cup C)}$}, which is equal to \smash{$\frac{\bar\alpha_{n-1}-\bar\alpha_{n}}{1-\bar\alpha_{n}}$}.
Consequently, \smash{$\lambda^{(C)}(t)$} is given as a thinned sum of Dirac measures over \smash{$\vt^{(A\cup C)}$} (cf., \Figref{fig:posterior}).

\underline{\textbf{Case D:}} The set $\vt^{(D)}$ contains all points \smash{$t \notin (\vt^{(0)}\cup \vt^{(n)})$} that were added until step $n-1$ and thinned at step $n$. 
Again using Equations~\ref{eq:forward_step} and \ref{eq:forward_direct}, we can see that these points were added with intensity \smash{$(1-\bar\alpha_{n-1})\lambda_{HPP}$} and then removed with probability \smash{$\alpha_n$} at the next step. 
Equivalently, we can write down the intensity that governs this process as \smash{$\lambda^{(D)}(t)=(1-\bar\alpha_{n-1})(1-\alpha_n)\lambda_{HPP}$}, i.e., sample points from an HPP and thin them to generate a sample \smash{$\vt^{(D)}$}.

\underline{\textbf{Case E:}} The set \smash{$\vt^{(E)}$} can be found by thinning \smash{$\vt^{(E\cup F)}=\vt^{(n)}\setminus\vt^{(0)}$} and contains the points that were added by step $n-1$ and then kept at step $n$. 
The processes that generated \smash{$\vt^{(E)}$} and \smash{$\vt^{(F)}$} are two independent HPPs with intensities \smash{$\lambda^{(E)}=(1-\bar\alpha_{n-1})\alpha_n\lambda_{HPP}$} and \smash{$\lambda^{(F)}=(1-\alpha_{n})\lambda_{HPP}$}, respectively, where \smash{$\lambda^{(E)}(t)$} is derived in a similar way to \smash{$\lambda^{(D)}(t)$}. 
Since \smash{$\vt^{(E)}$} and \smash{$\vt^{(F)}$} are independent HPPs and we know \smash{$\vt^{(E\cup F)}$}, the number of points in \smash{$\vt^{(E)}$} follows a Binomial distribution with probability \smash{$p=\frac{\lambda^{(E)}}{\lambda^{(E)}+\lambda^{(F)}}$} (see \Appref{app:derivation_binomial_conditional} for details). 
That means we can sample \smash{$\vt^{(E)}$} given \smash{$\vt^{(n)}$} and \smash{$\vt^{(0)}$} by simply thinning \smash{$\vt^{(E\cup F)}$} with probability $1-p$ and express the intensity as a thinned sum of Dirac functions (cf., \Figref{fig:posterior}).

For sequences of the training set, where \smash{$\vt^{(0)}$} is known, we can compute these intensities for all samples \smash{$\vt^{(n)}$} and reverse the forward process. However, \smash{$\vt^{(0)}$} is unknown when sampling new sequences. Therefore, similarly to the denoising diffusion approaches \cite{ho2020denoising}, in the next section, we show how to approximate the posterior intensity, given only \smash{$\vt^{(n)}$}. Further, in \Secref{sec:sampling}, we demonstrate how the trained neural network can be leveraged to sample new sequences \smash{$\vt\sim\lambda_0$}.

\subsection{Parametrization and training}\label{sec:training}
\begin{figure}[t]
\centering
  \includegraphics[width=0.85\linewidth, trim={5.2cm 6.4cm 6cm 4.5cm},clip]{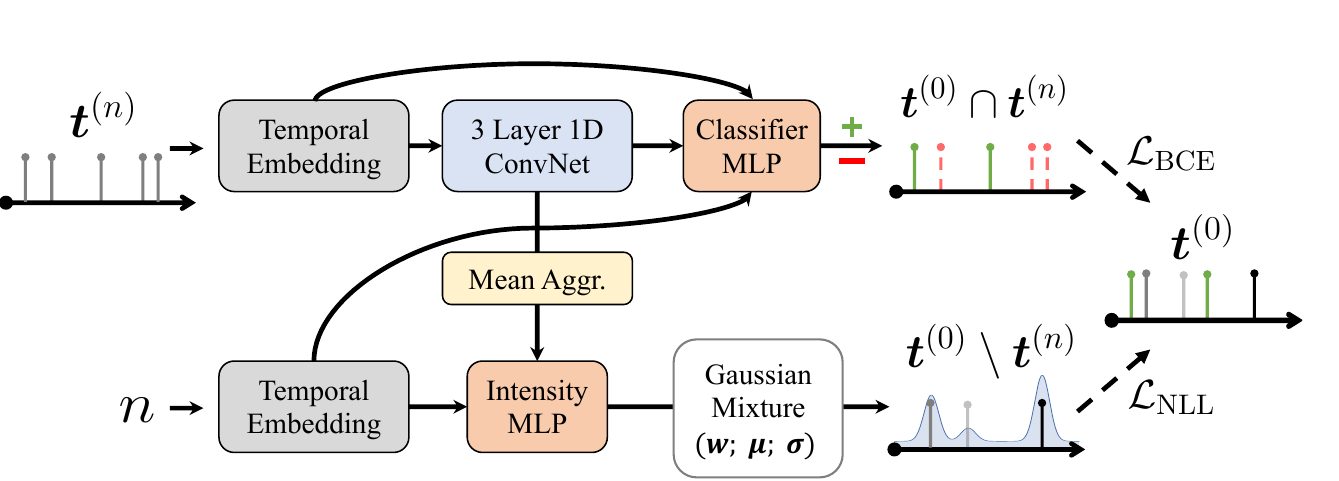}
  \caption{Architecture of our model predicting \smash{$\vt_0$ from $\vt_n$.}}
  \label{figure3}
\end{figure}

In the previous section we have derived the intensity \smash{$\lambda_{n-1}(t \mid \vt^{(0)}, \vt^{(n)})$} of the posterior \smash{$q(\vt^{(n-1)}\mid\vt^{(0)}, \vt^{(n)})$} for the reverse process, i.e., the intensity of points at step $n-1$ given \smash{$\vt^{(n)}$} and \smash{$\vt^{(0)}$}. 
Now we want to approximate this posterior using a model \smash{$\model(\vt^{(n-1)} \mid \vt^{(n)}, n)$} to learn to sample points \smash{$\vt^{(n-1)}$} given only \smash{$\vt^{(n)}$}. 
As we are only missing information about \smash{$\vt^{(0)}$} we will learn to model \smash{$\lambda^{(B)}(t)$} and \smash{$\lambda^{(A\cup C)}(t)$} to approximate \smash{$\hat{\vt}^{(0)} \approx \vt^{(0)}$} (cf., \Figref{figure3}) for each $n$ and then have access to the full posterior intensity from \Secref{sec:reverse_process} to reverse the noising process. 

\paragraph{Sequence embedding.} To condition our model on $\vt^{(n)}$ and $n$ we propose the following embeddings.
We use a sinusoidal embedding \cite{vaswani2017attention} to embed the diffusion time $n$.
Further, we encode each arrival time \smash{$t_i \in \vt^{(n)}$} and inter-event time \smash{$\tau_i = t_i-t_{i-1}$}, with $t_0=0$, to produce a temporal event embedding \smash{$\ve_i \in \mathbb{R}^d$} by applying a sinusoidal embedding \cite{vaswani2017attention}. Then we leverage a three-layered 1D-Convolutional Neural Network (CNN) with circular padding, dilation, and residual connections to compute a context embedding \smash{$\vc_i \in \mathbb{R}^d$}. Compared to attention and RNN-based encoders, the CNN is computationally more efficient and scales better with the length of event sequences while allowing us to capture long-range dependencies between the events. 
Finally, a global sequence embedding $\overline\vc$ is generated by a mean aggregation of the context embeddings.

\paragraph{Posterior approximation.} The posterior defining \smash{$\vt^{(D)}$} is independent of \smash{$\vt^{(0)}$} and \smash{$\vt^{(n)}$} and can be sampled from directly.
The set \smash{$\vt^{(B)}$} corresponds to those points that were kept from \smash{$\vt^{(0)}$} until the $n$-th step. 
Since all of these points are also included in \smash{$\vt^{(n)}$} we specify a classifier \smash{$g_\theta(\ve_i, \vc_i, n)$} with an MLP that predicts which points from \smash{$\vt^{(n)}$} belong to \smash{$\vt^{(0)}$}. 
As \smash{$\vt^{(0)}$} is known during training, this is a standard classification setting. 
We use the binary cross entropy (BCE) loss \smash{$\mathcal{L}_\mathrm{BCE}$} to train \smash{$g_\theta$}. 
Note that classifying \smash{$\vt^{(B)}$} from \smash{$\vt^{(n)}$} simultaneously predicts \smash{$\vt^{(n)}\setminus \vt^{(0)}= \vt^{(E \cup F)}$}. Therefore we can subsequently attain \smash{$\vt^{(E)}$} by thinning \smash{$\vt^{(E \cup F)}$} as explained in \Secref{sec:reverse_process}.

\looseness=-1
To sample points \smash{$\vt^{(C)}$} we have to specify the intensity function \smash{$\lambda^{(A\cup C)}(t)$} that will be thinned to attain \smash{$\vt^{(C)}$} (cf., \Secref{sec:reverse_process}). As \smash{$\lambda^{(A\cup C)}(t)$} is a mixture of Dirac functions we use an \emph{unnormalized} mixture of $H$ weighted and truncated Gaussian density functions $f$ on \smash{$[0, T]$} to parameterize the inhomogeneous intensity:
\begin{align}
    \lambda^{(A\cup C)}_\theta(t) =K\sum_{j=1}^H w_j f\left(t; \mu_j, \sigma_j\right),
\end{align}
where $w_j = \mathrm{Softplus}(\mathrm{MLP}_{w}([n,\overline\vc]))$, $\mu_j = \mathrm{Sigmoid}(\mathrm{MLP}_{\mu}([n,\overline\vc]))$ and $\sigma_j=\exp(-|\mathrm{MLP}_{\sigma}([n,\overline\vc])|)$ are parameterized by MLPs with two layers, a hidden dimension of $d$ and a ReLU activation.
Note that the Gaussian function is the standard approximation of the Dirac delta function and can, in the limit $\sigma \to 0$, perfectly approximate it.
We include $K$, the number of points in \smash{$\vt^{(n)}$}, in the intensity to more directly model the number of events.
Then \smash{$\lambda^{(A\cup C)}_\theta(t)$} is trained to match samples \smash{$\vt^{(A\cup C)} \sim \lambda^{(A\cup C)}$} by minimizing the negative log-likelihood (NLL):
\begin{align}\label{eq:tpp_likelihood}
    \mathcal{L}_\mathrm{NLL} = -\log p(\vt^{(A\cup C)}) = -\sum_{t_i\in\vt^{(A\cup C)}} \log \lambda^{(A\cup C)}_\theta(t_i) + \int_0^T \lambda^{(A\cup C)}_\theta(t) \diff t.
\end{align}
Thanks to the chosen parametrization, the integral term in \smash{$\mathcal{L}_\mathrm{NLL}$} can be efficiently computed in any deep-learning framework using the `erf` function, without relying on Monte Carlo approximation.
We present an overview of our model architecture to predict \smash{$\vt^{(0)}$} from \smash{$\vt^{(n)}$} in \Figref{figure3}.

\paragraph{Training objective.}
\looseness=-1
The full model is trained to minimize \smash{$\mathcal{L} = \mathcal{L}_\mathrm{NLL}+\mathcal{L}_\mathrm{BCE}$}. During training, we do not have to evaluate the true posterior or sample events from any of the posterior distributions. Instead, we can simply sample $n$ and subsequently \smash{$\vt^{(n)}$} and minimize \smash{$\mathcal{L}$} for \smash{$\vt^{(n)}$}. 
Interestingly, in \Appref{app:elbo}, we show that \smash{$\mathcal{L}$} is equivalent to the Kullback-Leibler (KL) divergence between the approximate posterior \smash{$\model(\vt^{(n-1)} \mid \vt^{(n)}, n)$} and the true posterior \smash{$q(\vt^{(n-1)} \mid \vt^{(0)}, \vt^{(n)})$}. Ultimately, this shows that optimizing the evidence lower bound (ELBO) of the proposed model boils down to simply learning a binary classification and fitting an inhomogeneous intensity.

\subsection{Sampling}\label{sec:sampling}
\RestyleAlgo{ruled}
\begin{wrapfigure}[18]{r}{0.55\textwidth}
\vspace{-0.5cm}
    \begin{algorithm}[H]
    \caption{Sampling}\label{alg:sampling}
    $\vt^{(n=N)} \sim \lambda_\mathrm{HPP}$\;
    \For {$n \in \{N, \dots, 1\}$}{
        sample $ \hat{\vt}^{(B)} \sim \sum\limits_{t_i \in  \vt^{(n)}} g_{\theta}(t_i\mid \ve_i, \vc_i, n)\delta_{t_i}(t)$\;
        sample $\hat{\vt}^{(C)} \sim \frac{\bar\alpha_{n-1} - \bar\alpha_{n}}{1 - \bar\alpha_{n}}\lambda^{(A\cup C)}_\theta(t\mid\vt^{(n)}, n)$\;
        sample $\hat{\vt}^{(D)} \sim (1 - \bar{\alpha}_{n-1}) (1-\alpha_n) \lambda_{\mathrm{HPP}}$\;
        sample $\hat{\vt}^{(E)} \sim \sum\limits_{t_i \in \vt^{(n)} \setminus \hat{\vt}^{(B)}} \frac{\alpha_n - \bar{\alpha}_{n}}{1 - \bar{\alpha}_{n}} \delta_{t_i}(t)$\;
        $\vt^{(n-1)} \gets  \hat{\vt}^{(B)} \cup \hat{\vt}^{(C)} \cup  \hat{\vt}^{(D)} \cup  \hat{\vt}^{(E)}$\;
    }
    sample \smash{$\hat{\vt}^{(A\cup C)} \sim \lambda^{(A\cup C)}_\theta(t\mid\vt^{(1)}, 1)$}\;
    sample $\hat{\vt}^{(B)} \sim \sum\limits_{t_i \in  \vt^{(1)}}g_{\theta}(t_i\mid \ve_i, \vc_i, 1)\delta_{t_i}(t)$\;
    $\vt \gets  \hat{\vt}^{(B)} \cup \hat{\vt}^{(A\cup C)} $\;
    \Return $\vt$
    \end{algorithm}
\end{wrapfigure}
\looseness=-1
To sample an event sequence from our model, we start by sampling \smash{$\vt^{(N)}$} from an HPP with \smash{$\lambda_\mathrm{HPP}$}.
Subsequently, for each \smash{$n \in [N,\cdots,1]$}, \smash{$\hat{\vt}^{(0)}$} is predicted by classifying \smash{$\hat{\vt}^{(B)}$} and sampling \smash{$\hat{\vt}^{(A\cup C)}$} from $\lambda^{(A\cup C)}_\vtheta(t)$.
Note that the Poisson distribution with intensity $\Lambda^{(A\cup C)}(T) = \int_0^T \lambda^{(A\cup C)}_\vtheta(t) \diff t$ parameterizes the number of points in \smash{$A\cup C$}. Therefore, \smash{$\hat{\vt}^{(A \cup C)}$} can be sampled by first sampling the number of events and then sampling the event times from the normalized intensity \smash{$\lambda^{(A\cup C)}_\vtheta(t)/\Lambda^{(A\cup C)}(T)$}.
Given our predicted \smash{$\hat{\vt}^{(0)}$} we can sample \smash{$\hat{\vt}^{(n-1)}$} from the posterior intensity defined in \Secref{sec:reverse_process}.
By repeating this process, we produce a sequence of \smash{$\vt^{(n-1)}$}s. Finally, we obtain a denoised sample \smash{$\vt^{(0)}$} by predicting it from \smash{$\hat{\vt}^{(1)}$}.
We provide an overview of the sampling procedure as pseudo-code in Algorithm \ref{alg:sampling}.

\subsection{Conditional sampling}\label{sec:cond_sampling}
The above-described process defines an \emph{unconditional} generative model for event sequences on an interval \smash{$[0, T]$}. 
For many (multi-step) forecasting applications, such as earthquake forecasting \cite{dascher2023using}, we need to condition our samples on previous event sequences and turn our model into a conditional one that can generate future event sequences in \smash{$[H, H+T]$} given the past observed events in \smash{$[0, H]$}.
To condition our generative model on a history, we apply a simple GRU encoder to encode the history into a $d$-dimensional history embedding \smash{$\vh$}, which subsequently conditions the classifier and intensity model by being added to the diffusion time embedding.

\section{Related work}\label{sec:related}
\paragraph{Autoregressive neural TPPs.}
\looseness=-1
Most neural TPPs model the intensity or density of each event conditional on a history and consequently consist of two parts: a history encoder and an intensity/density decoder.
As history encoders, RNNs \citep{du2016recurrent,shchur2019intensity} and attention-based set encoders \cite{zhang2020self,zuo2020transformer} have been proposed. Attention-based encoders are postulated to better model long-range dependencies in the event sequences, but at the cost of a more complex encoder structure \cite{shchur2021neural}.
To decode the intensity $\lambda(t|\mathcal{H})$, density $p(t|\mathcal{H})$ or the cumulative hazard function from the history, uni-modal distributions \cite{du2016recurrent}, mixture-distributions \cite{shchur2019intensity}, a mixture of kernels \cite{okawa2019deep,soen2021unipoint,zhang2020cause}, neural networks \cite{omi2019fully} and Gaussian diffusion \cite{linexploring} have been proposed.
Another branch of neural TPPs models the event times conditional on a latent variable that follows a continuous-time evolution \cite{chen2020neural,enguehard2020neural,hasan2023inference,jia2019neural}, where, e.g., \citet{hasan2023inference} relate inter-event times of a TPP to the excursion of a stochastic process.
In general, most neural TPPs are trained by maximizing the log-likelihood, but other training approaches have been proposed \cite{li2018learning,linexploring,xiao2017wasserstein}.
We want to highlight the difference of our model to two related works. TriTPP \cite{shchur2020fast} learns a deterministic mapping between a latent HPP and a TPP using normalizing flows, which allows for parallel sampling. However, it models the conditional hazard function, which forces a conditional dependency of later arrival times and can still produce error accumulation. 
\citet{linexploring} proposed an autoregressive TPP model leveraging Gaussian diffusion to approximate the conditional density. Besides being autoregressive, the model does not directly model the number of points in the TPP but instead is trained to maximize the ELBO of the next inter-event time.

\paragraph{Non-autoregressive neural TPPs.}
An alternative to the conditional (autoregressive) modeling of TPPs is to apply a latent variable model that learns to relate entire point processes to latent variables.
The class of Cox processes \cite{cox1955some,dr1972statistical,deep_neyman_scott} models point processes through a hierarchy of latent processes under the assumption that higher-level latent variables trigger lower-level realizations.
\modelname{} can be considered to be a non-autoregressive latent variable model.

\paragraph{Denoising diffusion models.}
\looseness=-1
Recently, denoising diffusion models on continuous state spaces \cite{ho2020denoising,sohl2015deep} established the new state-of-the-art for many image applications \cite{han2022card,ho2020denoising,li2022srdiff}.
Subsequently, diffusion models for other application domains such as point clouds \cite{luo2021diffusion,lyu2021conditional}, physical simulations \cite{Kohl2023TurbulentFS,lienen2023zero,lippe2023pde} and time-series \cite{alcaraz2022diffusion,bilos2023diffusion,kollovieh2023predict,tashiro2021csdi} emerged.
While the majority of denoising diffusion models are based on Gaussian transition kernels in continuous state spaces proposed in \cite{ho2020denoising,sohl2015deep}, a variety of diffusion models for discrete state spaces such as graphs \cite{vignac2022digress}, text \cite{austin2021structured,li2022diffusion} and images \cite{austin2021structured,chen2023learning} have been presented.
Here, we highlight the similarity of our work to the concurrent work of \citet{chen2023learning}, who derived a diffusion process that models pixel values as a count distribution and thins them to noise images. 
In contrast to the related work on continuous and discrete state space diffusion models, \modelname{} constitutes a novel diffusion model defined on a state space that captures both the discrete and continuous components of point processes.

\section{Experiments}\label{sec:experiments}
We evaluate the proposed model in two settings: density estimation and forecasting.
In density estimation, the goal is to learn an unconditional model for event sequences. 
As for forecasting, the objective is to accurately predict the entire future event sequence given the observed past events.

\paragraph{Data.} \modelname{} is evaluated on 7 real-world datasets proposed by \citet{shchur2020fast} and 6 synthethic datasets from \citet{omi2019fully}. The synthetic datasets consist of Hawkes1 and Hawkes2 \cite{hawkes1971spectra}, a self-correcting (SC) \cite{isham1979self}, inhomogeneous Poisson process (IPP) and a stationary and a non-stationary renewal process (MRP, RP) \cite{sevast1975renewal,dr1972statistical}.
For the real-world datasets, we consider PUBG, Reddit-Comments, Reddit-Submissions, Taxi, Twitter, and restaurant check-ins in Yelp1 and Yelp2.
We split each dataset into train, validation, and test set containing 60\%, 20\%, and 20\% of the event sequences, respectively.
Further dataset details and statistics are reported in \Appref{app:datasets}.

\paragraph{Baselines.} 
We apply the {\it RNN}-based intensity-free TPP model from \citet{shchur2019intensity}. Similar to \citet{sharma2021identifying}, we further combine the intensity-free model with an attention-based encoder from \citet{zuo2020transformer} as a {\it Transformer} baseline.
Additionally, we compare our model to an autoregressive TPP model with a continuous state Gaussian-diffusion \cite{ho2020denoising} from \citet{linexploring}, which we abbreviate as {\it GD}. Lastly, {\it TriTPP} \cite{shchur2020fast} is used as a model that provides parallel but autoregressive sampling.

\paragraph{Training and model selection.} We train each model by its proposed training loss using Adam \cite{kingma2014adam}.
For our model, we set the number of diffusion steps to $N=100$, apply the cosine beta-schedule proposed in Glide \cite{nichol2021glide}, and set $\lambda_{\mathrm{HPP}}=1$ for the noising process.
We apply early stopping, hyperparameter tuning, and model selection on the validation set for each model. 
Further hyperparameter and training details are reported in \Appref{app:experiments_set_up}.

\subsection{Sampling -- Density estimation}
A good TPP model should be flexible enough to fit event sequences from various processes.
We evaluate the generative quality of the TPP models on 13 synthetic and real-world datasets by drawing 4000 TPP sequences from each model and computing distance metrics between the samples and event sequences from a hold-out test set.
In short, the goal of this experiment is to show that our proposed model is a flexible TPP model that can generate event sequences that are 'close' to the samples from the data-generating distribution.

\paragraph{Metrics.}
In general, diffusion models cannot evaluate the exact likelihood. 
Instead, we evaluate the quality of samples by comparing the distributions of samples from each model and the test set with the maximum mean discrepancy measure (MMD) \cite{gretton2012kernel} as proposed by \citet{shchur2020fast}.
Furthermore, we compute the Wasserstein distance \cite{ramdas2017wasserstein} between the distribution of sequence lengths of the sampled sequences and the test set. 
We report the results on the test set averaged over five runs with different seeds.

\begin{table}[]
\centering
\caption{MMD $(\downarrow)$ between the TPP distribution of sampled sequences and hold-out test set (\textbf{bold} best, \underline{underline} second best). The results with standard deviation are reported in \Appref{app:experiments}.}
\resizebox{\textwidth}{!}{%
\begin{tabular}{ l|cccccc|ccccccc} 
& Hawkes1 & Hawkes2 & SC & IPP & RP & MRP & PUBG & Reddit-C & Reddit-S & Taxi & Twitter & Yelp1 & Yelp2 \\
\hline
RNN &\textbf{0.02}&\textbf{0.01}&\textbf{0.08}&0.05&\textbf{0.01}&\textbf{0.03}&\underline{0.04}&\textbf{0.01}&\textbf{0.02}&\textbf{0.04}&\textbf{0.03}&\underline{0.07}&\textbf{0.03} \\
Transformer&\underline{0.03}&0.04&0.19&0.10&\underline{0.02}&0.19&0.06&0.05&0.09&0.09&0.08&0.12&0.14\\
GD &0.06&0.06&\underline{0.13}&0.08&0.05&0.14&0.11&\underline{0.03}&\underline{0.03}&0.10&0.15&0.12&0.10 \\
TriTPP&\underline{0.03}&0.04&0.23&\underline{0.04}&\underline{0.02}&\underline{0.05}&0.06&0.09&0.12&\underline{0.07}&\underline{0.04}&\textbf{0.06}&0.06\\
\modelname{} &\textbf{0.02}&\underline{0.02}&0.19&\textbf{0.03}&\underline{0.02}&0.10&\textbf{0.03}&\textbf{0.01}&\textbf{0.02}&\textbf{0.04}&\underline{0.04}&0.08&\underline{0.04}\\

\end{tabular}}

\label{density_MMD}
\bigskip
\centering
\caption{Wasserstein distance $(\downarrow)$ between the distribution of the number of events of sampled sequences and hold-out test set (\textbf{bold} best, \underline{underline} second best). The results with standard deviation are reported in \Appref{app:experiments}.}
\resizebox{\textwidth}{!}{%
\begin{tabular}{ l|cccccc|ccccccc} 
& Hawkes1 & Hawkes2 & SC & IPP & RP & MRP & PUBG & Reddit-C & Reddit-S & Taxi & Twitter & Yelp1 & Yelp2 \\
\hline
RNN &\textbf{0.03}&\textbf{0.01}&\textbf{0.00}&\underline{0.02}&\textbf{0.02}&\textbf{0.01}&\textbf{0.02}&\textbf{0.01}&\underline{0.05}&\textbf{0.02}&\textbf{0.01}&\underline{0.04}&\textbf{0.02} \\
Transformer&0.06&0.04&0.06&0.07&\underline{0.04}&0.11&0.04&0.08&0.11&0.13&\underline{0.05}&0.11&0.21 \\
GD &0.16&0.13&0.50&0.42&0.28&0.50&0.54&\underline{0.02}&0.16&0.33&0.07&0.26&0.25 \\
TriTPP&\textbf{0.03}&0.03&\underline{0.01}&\textbf{0.01}&\textbf{0.02}&\underline{0.03}&\underline{0.03}&0.09&0.09&0.04&\textbf{0.01}&\textbf{0.03}&\underline{0.04}\\
\modelname{} &\underline{0.04}&\underline{0.02}&0.08&\textbf{0.01}&\textbf{0.02}&0.04&\textbf{0.02}&0.03&\bf{0.04}&\underline{0.03}&\textbf{0.01}&\underline{0.04}&\textbf{0.02}\\

\end{tabular}}
\label{density_wasserstein}

\end{table}

\paragraph{Results.} \Tabref{density_MMD} presents the MMD results for all models and datasets. 
Among them, the RNN baseline demonstrates a strong performance across all datasets and outperforms both {\it Transformer} and {\it GD}.
Notably, \modelname{} exhibits competitive results with the autoregressive baseline, surpassing or matching ($\pm 0.01$) it on 11/13 datasets.
Additionally, \modelname{} consistently outperforms the {\it Transformer} and {\it GD} model on all datasets except SC and RP.
Lastly, {\it TriTPP} performs well on most datasets but is outperformed or matched by our model on all but two datasets.

\Tabref{density_wasserstein} shows the result for comparing the count distributions. 
Overall, the Wasserstein distance results align closely with the MMD results. 
However, the {\it GD} model is an exception, displaying considerably worse performance when focusing on the count distribution.
This outcome is expected since the training of the {\it GD} model only indirectly models the number of events by maximizing the ELBO of each diffusion step to approximate the conditional density of the next event and not the likelihood of whole event sequences.
Again, \modelname{} shows a very strong performance, further emphasizing its expressiveness.

In summary, these results demonstrate the flexibility of our model, which can effectively capture various complex TPP distributions and matches the state-of-the-art performance in density estimation.

\subsection{Conditional sampling -- Forecasting}
Forecasting event sequences from history is an important real-world application of TPP models. In this experiment, we evaluate the forecasting capability of our model on all real-world datasets.
We evaluate each model's performance in forecasting the events in a forecasting window $\Delta T$, by randomly drawing a starting point $T_s \in [\Delta T, T-\Delta T]$. 
Then, the events in $[0,T_s]$ are considered the history, and $[T_s,T_s+\Delta T]$ is the forecasting time horizon.

In the experiment, we randomly sample 50 forecasting windows for each sequence from the test set, compute history embedding with each model's encoder, and then conditionally sample the forecast from each model.
Note that {\it TriTPP} does not allow for conditional sampling and is therefore not part of the forecasting experiment.

\paragraph{Metrics.}
To evaluate the forecasting experiment, we will not compare distributions of TPPs but rather TPP instances.
We measure the distance between two event sequences, i.e., forecast and ground truth data, by computing the distance between the count measures with the Wasserstein distance between two TPPs, as introduced by \citet{xiao2017wasserstein}. 
Additionally, we report the mean absolute relative error (MAPE) between the predicted sequence length and ground truth sequence length in the forecasting horizon.
We report the results on the test set averaged over five runs with different seeds.

\begin{table}[t]
\centering
\caption{Wasserstein distance between forecasted event sequence and ground truth reported for 50 random forecast windows on the test set (lower is better). The results with standard deviation are reported in \Appref{app:experiments}.}
\resizebox{0.9\textwidth}{!}{%
\begin{tabular}{ l|ccccccc} 
& PUBG & Reddit-C & Reddit-S & Taxi & Twitter & Yelp1 & Yelp2 \\
\hline
Average Seq. Length & 76.5 & 295.7 & 1129.0 & 98.4 & 14.9 & 30.5 & 55.2 \\
\hline
RNN &6.15&\underline{35.22}&39.23&4.14&\underline{2.04}&\underline{1.28}&\underline{2.21} \\
Transformer&\underline{2.45}&38.77&\underline{27.52}&\underline{3.12}&2.09&1.29&2.64 \\
GD &5.44&44.72&64.25&4.32&2.16&1.52&4.25 \\
\modelname{} (Ours) &\textbf{2.03}&\textbf{17.18}&\textbf{21.32}&\textbf{2.42}&\textbf{1.48}&\textbf{1.00}&\textbf{1.54}

\end{tabular}}
\label{forecast_wasserstein}
\bigskip
\centering
\caption{Count MAPE $\times 100\%$ between forecasted event sequences and ground truth reported for 50 random forecast windows on the test set (lower is better). The results with standard deviation are reported in \Appref{app:experiments}.}
\resizebox{0.9\textwidth}{!}{%
\begin{tabular}{ l|ccccccc} 

& PUBG & Reddit-C & Reddit-S & Taxi & Twitter & Yelp1 & Yelp2 \\
\hline
Average Seq. Length & 76.5 & 295.7 & 1129.0 & 98.4 & 14.9 & 30.5 & 55.2 \\
\hline
RNN &1.72&\underline{5.47}&0.68&0.54&\underline{0.95}&\underline{0.59}&\underline{0.72} \\
Transformer&\underline{0.65}&7.38&\underline{0.55}&\underline{0.46}&1.18&0.63&0.99 \\
GD &1.66&10.49&1.33&0.71&1.43&0.78&1.65 \\
\modelname{} (Ours) &\textbf{0.45}&\textbf{1.07}&\textbf{0.38}&\textbf{0.37}&\textbf{0.69}&\textbf{0.45}&\textbf{0.50}

\end{tabular}}

\label{forecast_mae}
\end{table}

\begin{wrapfigure}[14]{r}{0.45\textwidth}
\vspace{-.45cm}
\centering
  \includegraphics[width=\linewidth]{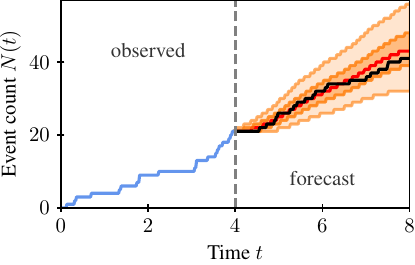}
  \vspace{-0.415cm}
\caption{$5\%$, $25\%$, $50\%$, $75\%$, and $95\%$ quantile of forecasts generated by \modelname{} for a Taxi event sequence (\textit{blue}: history, \textit{black} ground truth future).}\label{fig:forecast}
\end{wrapfigure}
\paragraph{Results.} 
\looseness=-1
\Tabref{forecast_wasserstein} presents the average Wasserstein distance between the predicted and ground truth forecast sequences.
The results unequivocally demonstrate the superior performance of our model by forecasting entire event sequences, surpassing all autoregressive baselines on all datasets.
Notably, the disparity between \modelname{} and the baselines is more pronounced for datasets with a higher number of events per sequence, indicating the accumulation of prediction errors in the autoregressive models.
Further, the transformer baseline achieves better forecasting results than the RNN baseline for some datasets with more events. 
This suggests that long-range attention can improve autoregressive forecasting and mitigate some error accumulation.

\Tabref{forecast_mae} reports the MAPE between the forecasted and ground truth sequence length. 
The MAPE results align consistently with the Wasserstein distance across all models and datasets.
\Figref{fig:forecast} depicts the quantiles for 1000 forecasts generated by \modelname{} for one Taxi event sequence and highlights the predictive capacity of our model. 
Overall, our model outperforms state-of-the-art TPP models in forecasting real-world event sequences.
\section{Discussion}\label{sec:conclusion}
\paragraph{\modelname{} vs. autoregressive TPP models.}
On a conceptual level, \modelname{} presents a different trade-off compared to other TPP models: Instead of being autoregressive in event time, our model gradually refines the entire event sequence in parallel at every diffusion step to produce a sample from the learned data distribution. 
Thereby, we have found that our model is better suited for forecasting and modeling very long event sequences than autoregressive TPP models.
Furthermore, the iterative refinement of the entire sequence allows us to leverage simple and shared layers to accurately model the long-range interaction between events and results in nearly constant sampling times across different sequence lengths (cf., \Appref{app:runtimes}).

\paragraph{Limitations and future work.}
With \modelname{}, we have derived a novel diffusion-inspired model for TPPs. Thereby, we focused on modeling the arrival times of the events and did not model continuous and discrete marks.
However, we see this as an exciting extension to our framework, which might incorporate Gaussian diffusion \cite{ho2020denoising} for continuous marks and discrete diffusion \cite{austin2021structured} for discrete marks.
Further, while generative diffusion is known to produce high-quality samples, it also can be expensive.
Besides tuning the number of diffusion steps, future work could focus on alternative and faster sampling routines \cite{nichol2021improved}.
Ultimately, we hope that by having connected diffusion models with TPPs, we have opened a new direction to modeling TPPs and broadened the field of diffusion-based models. 
Here, it would be especially interesting for us to see whether our framework could benefit other application domains in machine learning that involve sets of varying sizes, such as graph generation (molecules), point clouds, and spatial point processes.

\section{Conclusion} By introducing \modelname{}, we have connected the fields of diffusion models and TPPs and derived a novel model that naturally handles the discrete and continuous nature of point processes.
Our model permits parallel and closed-form sampling of entire event sequences, overcoming common limitations of autoregressive TPP models.
In our experimental evaluation, we demonstrated the flexibility of \modelname{}, which can effectively capture complex TPP distributions and matches the state-of-the-art performance in density estimation. 
Additionally, in a long-term forecasting task on real-world data, our model distinctly outperforms the state-of-the-art TPP models by predicting entire forecasting windows non-autoregressively.

\section*{Broader impact}
We see the proposed model as a general framework to model continuous-time event data.
As such, our method can be applied to many fields, where common application domains include traffic, social networks, and electronic health records.
We do not find any use cases mentioned above raise ethical concerns; however, it is essential to exercise caution when dealing with sensitive personal data.
\section*{Acknowledgements}
This research was supported by the German Research Foundation, grant GU 1409/3-1.
\newpage

\bibliographystyle{abbrvnat}
\bibliography{references}

\newpage
\bookmarksetupnext{level=part}
\pdfbookmark{Appendix}{appendix}
\appendix

\section{On the relationship between the applied loss and the ELBO}\label{app:elbo}

In the following section, we investigate the relationship between our applied loss function and the ELBO to the unknown data distribution derived for diffusion models. 
The ELBO of diffusion models \cite{ho2020denoising} is given as follows:
\begin{equation}
\begin{split}
    \mathcal{L}_{ELBO} = \mathbb{E}_{q} \Big[\underbrace{D_{KL}\left(q(\vt^{(N)}\mid\vt^{(0)})\parallel p(\vt^{(N)})\right)}_{\mathcal{L}_{N}}- \underbrace{\log p_{\theta}(\vt^{(0)}\mid\vt^{(1)})}_{\mathcal{L}_{0}}+\\ \sum_{n=2}^{N} \underbrace{D_{KL}\left(q(\vt^{(n-1)}\mid\vt^{(0)},\vt^{(n)})\parallel p_{\theta}(\vt^{(n-1)}\mid\vt^{(n)})\right)}_{\mathcal{L}_{n}}\Big].
\end{split}
\end{equation}

Additionally, the Janossi density \cite{daley2006introduction} of an event sequence \smash{$\vt$} on $[0,T]$ allows us to represent each element of the ELBO in terms of our derived inhomogeneous (approximate) posterior intensities (\Secref{sec:reverse_process} and \Figref{fig:posterior}) as follows:

\begin{equation}
    p(\vt) =
    \exp\left({-\int_0^T \lambda(t)\diff t}\right) \prod_{t_i\in \vt} \lambda(t_i).
\end{equation} 

$\boldsymbol{\mathcal{L}_{N}}$: $\mathcal{L}_{N}$ is constant as the intensity \smash{$\lambda_{\mathrm{HPP}}$} defining \smash{$q(\vt^{(N)}\mid\vt^{(0)})$} and \smash{$p(\vt^{(N)})$} has no learnable parameters.

$\boldsymbol{\mathcal{L}_{0}}$: We directly train our model to optimize this likelihood term as described in \Secref{sec:training}.

$\boldsymbol{\mathcal{L}_{n}}$: The KL divergence between two densities is defined as:
\begin{align}
    D_{KL}(q\parallel p_{\theta}) = \mathbb{E}_{q}\left[\log (q(\vt^{(n-1)}\mid\vt^{(0)},\vt^{(n)}))-\log (p_{\theta}(\vt^{(n-1)}\mid\vt^{(n)}))\right],
\end{align}
where only the right-hand side relies on $\theta$, letting us minimize the KL divergence by maximizing the expectation over the log-likelihood \smash{$\log(p_{\theta}(\vt^{(n-1)}\mid\vt^{(n)}))$}.

To add some additional context to the KL divergence in $\mathcal{L}_{n}$ and similar to the derivation of the posterior in \Secref{sec:reverse_process}, we will further distinguish three cases:

\begin{enumerate}
    \item \textbf{Case B \& E:} \smash{$q(\vt^{(n-1)}\mid\vt^{(0)}, \vt^{(n)})$} and \smash{$p_\theta(\vt^{(n-1)}\mid\vt^{(n)})$} are defined by Bernoulli distributions over each element of \smash{$\vt^{(n)}$}. By definition the cross-entropy $H(q,p)$ of the distribution $p$ relative to $q$ is given by \smash{$ H(q,p) = H(q) + D_{KL}(q\parallel p)$}, where $H(q)$ is the entropy and $D_{KL}$ the KL divergence. We can see that minimizing the (binary) cross-entropy is equivalent to minimizing the KL divergence, as the entropy $H(q)$ is constant for the data distribution. 
    \item \textbf{Case C:} Minimizing this KL divergence by maximizing the $\mathbb{E}_{q}\left[\log(p_{\theta}(\vt^{(n-1)} \mid\vt^{(n)}))\right]$ is proposed in \Secref{sec:reverse_process} to learn \smash{$\lambda_{\theta}^{(A\cup C)}(t)$}. Note that $q(\vt^{(n-1)}\mid\vt^{(0)},\vt^{(n)})$ is sampled from by independently thinning \smash{$\vt^{(0)}\setminus\vt^{(n)}$}. Consequently, by minimizing the NLL of our intensity $\lambda_{\theta}^{(A\cup C)}(t)$ with regards to $\vt^{(0)}\setminus\vt^{(n)}$, we optimize the expectation in closed form.
    \item \textbf{Case D:} Our parametrization uses the same intensity function for \smash{$q(\vt^{(n-1)}\mid\vt^{(0)},\vt^{(n)})$} and \smash{$p_\theta(\vt^{(n-1)}\mid\vt^{(n)})$}, which does not rely on any learned parameters.
\end{enumerate}

\section{Derivations}\label{app:derivations}

\subsection{Direct forward sampling}\label{app:derivation_forward_direct}

\begin{proof}
We first repeat \Eqref{eq:forward_step}:
\begin{align}
    \lambda_n(t) = \alpha_n \lambda_{n-1}(t) + (1 - \alpha_n) \lambda_{\text{HPP}} . \tag{\ref{eq:forward_step}}
\end{align}
Then for the first step we can write:
\begin{align*}
    \lambda_1(t) &= \alpha_1 \lambda_0(t) + (1 - \alpha_1) \lambda_{\text{HPP}} \\
    &= \left(\prod_{j=1}^{n=1} \alpha_j\right) \lambda_0 + \left(1 - \prod_{j=1}^{n=1} \alpha_j\right) \lambda_{\text{HPP}}.
\end{align*}
Assuming \Eqref{eq:forward_direct} holds for step $n-1$:
\begin{align*}
    \lambda_{n-1}(t) = \left(\prod_{j=1}^{n-1} \alpha_j\right) \lambda_0(t) + \left(1 - \prod_{j=1}^{n-1} \alpha_j \right) \lambda_{\text{HPP}},
\end{align*}
we can write for step $n$:
\begin{align*}
    \lambda_n(t) &= \alpha_n \lambda_{n-1}(t) + (1 - \alpha_n) \lambda_{\text{HPP}} \\
    &= \alpha_n \left( \left(\prod_{j=1}^{n-1} \alpha_j \right)\lambda_0(t) + \left( 1 - \prod_{j=1}^{n-1} \alpha_j \right) \lambda_{\text{HPP}} \right) + (1 - \alpha_n) \lambda_{\text{HPP}} \\
    &= \left(\prod_{j=1}^{n} \alpha_j\right) \lambda_0(t) + \left( \alpha_n - \prod_{j=1}^{n} \alpha_j \right) \lambda_{\text{HPP}} + (1 - \alpha_n) \lambda_{\text{HPP}} \\
    &= \left(\prod_{j=1}^{n} \alpha_j\right) \lambda_0(t) + \left( 1 - \prod_{j=1}^{n} \alpha_j \right) \lambda_{\text{HPP}} \\
    &= \bar\alpha_n \lambda_0(t) + (1 - \bar\alpha_n) \lambda_{\text{HPP}} ,
\end{align*}
which completes the proof by induction. 
\end{proof}

\subsection{Conditional distribution of Poisson variables}\label{app:derivation_binomial_conditional}

\begin{proposition}
    Given two independent random variables $X_1\sim Poisson(\lambda_1)$, $X_2\sim Poisson(\lambda_2)$, $X_1\mid X_1+X_2=k$ is Binomial distributed, i.e., $X_1\mid X_1+X_2=k\sim Binomial(x_1; k, \frac{\lambda_1}{\lambda_1 + \lambda_1})$.
\end{proposition}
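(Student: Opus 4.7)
The plan is to prove this classical Poisson thinning identity by direct computation with the probability mass functions. I would work with the definition of conditional probability and use two standard facts: (i) independence of $X_1$ and $X_2$ factorizes the joint PMF, and (ii) the sum of independent Poisson variables is Poisson with rate equal to the sum of rates, i.e.\ $X_1 + X_2 \sim \mathrm{Poisson}(\lambda_1 + \lambda_2)$.

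Concretely, first I would write
\begin{align*}
    P(X_1 = x_1 \mid X_1 + X_2 = k) = \frac{P(X_1 = x_1,\, X_2 = k - x_1)}{P(X_1 + X_2 = k)},
\end{align*}
valid for $0 \le x_1 \le k$. Using independence, the numerator factorizes as $P(X_1=x_1)P(X_2=k-x_1)$, and then I would substitute the Poisson PMFs $\frac{\lambda_i^{x_i} e^{-\lambda_i}}{x_i!}$ in the numerator and $\frac{(\lambda_1+\lambda_2)^k e^{-(\lambda_1+\lambda_2)}}{k!}$ in the denominator. The exponential factors cancel cleanly because $e^{-\lambda_1} e^{-\lambda_2} = e^{-(\lambda_1+\lambda_2)}$.

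What remains after cancellation is $\frac{k!}{x_1!(k-x_1)!} \cdot \frac{\lambda_1^{x_1} \lambda_2^{k-x_1}}{(\lambda_1+\lambda_2)^k}$, which I would rewrite as $\binom{k}{x_1} \left(\frac{\lambda_1}{\lambda_1+\lambda_2}\right)^{x_1} \left(\frac{\lambda_2}{\lambda_1+\lambda_2}\right)^{k-x_1}$ by splitting the power of $(\lambda_1+\lambda_2)^k$. Recognizing $\frac{\lambda_2}{\lambda_1+\lambda_2} = 1 - \frac{\lambda_1}{\lambda_1+\lambda_2}$ yields exactly the $\mathrm{Binomial}(k, \lambda_1/(\lambda_1+\lambda_2))$ PMF, completing the argument.

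There is no real obstacle here; the proof is essentially a bookkeeping exercise. The only subtle prerequisite is the Poisson sum identity, which can be taken as standard (it follows from a convolution or from the generating function $e^{\lambda(s-1)}$). I would note that this lemma is used in \textbf{Case E} of Section~\ref{sec:reverse_process} with the identifications $X_1 = |\vt^{(E)}|$, $X_2 = |\vt^{(F)}|$, and $k = |\vt^{(E\cup F)}|$, so that the thinning probability $p = \lambda^{(E)}/(\lambda^{(E)}+\lambda^{(F)})$ used to sample $\vt^{(E)}$ from $\vt^{(E\cup F)}$ is justified.
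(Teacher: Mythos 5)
Your proof is correct and follows essentially the same route as the paper's: apply the definition of conditional probability, factorize the numerator by independence, use the Poisson sum identity $X_1+X_2\sim\mathrm{Poisson}(\lambda_1+\lambda_2)$ for the denominator, cancel the exponentials, and rearrange into the $\mathrm{Binomial}\bigl(k,\lambda_1/(\lambda_1+\lambda_2)\bigr)$ PMF. The only difference is presentational; your added remark connecting the lemma to the thinning of $\vt^{(E\cup F)}$ in Case E matches how the paper uses the result.
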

\begin{proof}
The Poisson distributed random variables $X_1$ and $X_2$ have the following joint probability mass function:
\begin{align}
    P(X_1=x_1, X_2=x_2) = e^{-(\lambda_1)}\frac{\lambda_1^{x_1}}{x_1!} e^{-\lambda_2}\frac{\lambda_2^{x_2}}{x_2!},
\end{align}
which further defines the joint probability mass function of $P(X_1=x_1, X_2=x_2, Y=k)$ if $x_1+x_2=k$.
Additionally, it is well know that $Y = X_1+X_2$ is a Poisson random variable with intensity $ \lambda_1 + \lambda_2$ and therefore $P(Y=k)= e^{-(\lambda_1 + \lambda_2)}\frac{(\lambda_1 + \lambda_2)^k}{k!}$.
Then $P(X_1=x_1\mid Y=k)$ is given by the following:
\begin{align}
    P(X_1=x_1\mid Y=k) &= \frac{P(X_1=x_1, X_2=x_2, Y=k)}{P(Y=k)}\text{ if } x_1+x_2=k,\\
    &= \frac{e^{-\lambda_1}\frac{\lambda_1^{x_1}}{x_1!}e^{-\lambda_2}\frac{\lambda_2^{x_2}}{x_2!}}{e^{-(\lambda_1 + \lambda_2)}\frac{(\lambda_1 + \lambda_2)^k}{k!}}\text{ if } x_1+x_2=k,\\
    &= \frac{k!}{x_1!x_2!}\frac{\lambda_1^{x_1}\lambda_2^{x_2}}{(\lambda_1 + \lambda_2)^k}\text{ if } x_1+x_2=k,\\
    & =\frac{k!}{x_1!(k-x_1)!}\frac{\lambda_1^{x_1}}{(\lambda_1 + \lambda_2)^{x_1}}\frac{\lambda_2^{k-x_1}}{(\lambda_1 + \lambda_2)^{k-x_1}}\text{ if } x_1+x_2=k,\\
    & =\frac{k!}{x_1!(k-x_1)!}\left(\frac{\lambda_1}{(\lambda_1 + \lambda_2)}\right)^{x_1}\left(1-\frac{\lambda_1}{(\lambda_1 + \lambda_2)}\right)^{(k-x_1)}\text{ if } x_1+x_2=k,
\end{align}
where we have leveraged $x_2 = k-x_1$ and $\frac{\lambda_2}{(\lambda_1 + \lambda_2)} = 1- \frac{\lambda_1}{(\lambda_1 + \lambda_2)}$.
As we have shown $P(X_1=x_1\mid Y=k)$ follows the Binomial distribution with $p=\frac{\lambda_1}{(\lambda_1 + \lambda_2)}.$
\end{proof}

\newpage
\section{Datasets}\label{app:datasets}
\paragraph{Synthetic datasets.} The six synthethic dataset were sampled by \citet{shchur2020fast} following the procedure in Section 4.1 of \citet{omi2019fully} and consist of 1000 sequences on the interval $[0,100]$.

\paragraph{Real-world datasets.} The seven real-world datasets were proposed by \citet{shchur2020fast} and consist of PUBG, Reddit-Comments, Reddit-Submissions, Taxi, Twitter, Yelp1, and Yelp2.
The event sequences of \textbf{ PUBG} represent the death of players in a game of Player Unknown’s Battleground (PUBG).
The event sequences of \textbf{Reddit-Comments} represent the comments on the askscience subreddit within 24 hours after opening the discussion in the period from 01.01.2018 until 31.12.2019.
The event sequences of \textbf{Reddit-Submissions} represent the discussion submissions on the politics subreddit within a day in the period from 01.01.2017 until 31.12.2019.
The event sequences of \textbf{Taxi} represent taxi pick-ups in the south of Manhattan, New York.
The event sequences of \textbf{Twitter} represent tweets by user 25073877.
The event sequences of \textbf{Yelp1} represent check-ins for the McCarran International Airport recorded for 27 users in 2018.
The event sequences of \textbf{Yelp2} represent check-ins for all businesses in the city of Mississauga recorded for 27 users in 2018.

\begin{table}[t]
\centering
\caption{Statistics for the synthetic datasets.}
\resizebox{0.8\textwidth}{!}{%
\begin{tabular}{ l|ccccc} 

&\# Sequences  & $T$ &Average sequence length & $\tau$ \\
\hline
Hawkes1 &1000& 100 &95.4 & $1.01\pm 2.38$\\
Hawkes2 &1000& 100 &97.2 & $0.98\pm 2.56$\\
SC &1000& 100 &100.2 & $0.99\pm 0.71$\\
IPP &1000& 100 &100.3 & $0.99\pm 2.22$\\
RP &1000& 100 &109.2 & $0.83\pm 2.76$\\
MRP &1000& 100 &98.0 & $0.98\pm 1.83$\\
\end{tabular}}

\label{synthetic_datasets}
\end{table}

\begin{table}[t]
\centering
\caption{Statistics for the real-world datasets.}
\resizebox{\textwidth}{!}{%
\begin{tabular}{ l|ccccccc} 

&\# Sequences & $T$ & Unit of time & Average sequence length & $\tau$& $\Delta T$\\
\hline
PUBG &3001&30&minutes&76.5& $0.41\pm0.56$& 5\\
Reddit-C &1356&24&hours&295.7& $0.07\pm0.28$& 4\\
Reddit-S &1094&24&hours&1129.0& $0.02\pm 0.03$& 4\\
Taxi &182&24&hours&98.4&  $0.24\pm 0.40$& 4\\
Twitter &2019&24&hours&14.9&$1.26\pm 2.80$& 4\\
Yelp1 &319&24&hours&30.5& $0.77\pm 1.10$& 4\\
Yelp2 &319&24&hours&55.2&$0.43\pm 0.96$& 4
\end{tabular}}

\label{real_datasets}
\end{table}

We report summary statistics on the datasets in \Tabref{synthetic_datasets} and \ref{real_datasets}. Lately, the validity of some of the widely used real-world benchmark datasets was criticized \cite{bosser2023on}. In one-step-ahead prediction tasks with teacher forcing, very simple architectures achieved similar results to some of the more advanced ones. However, this seems to be more of a problem of the task than the datasets. In our work, we consider different tasks (density estimation and long-term forecasting) and metrics and have found significant empirical differences between the baselines on these datasets.
\section{Experimental set-up}\label{app:experiments_set_up}
All models but the transformer baseline were trained on an Intel Xeon E5-2630 v4 @ 2.20 GHz CPU with 256GB RAM and an NVIDIA GeForce GTX 1080 Ti. Given its RAM requirement, the transformer baseline had to be trained with batch size 32 on an NVIDIA A100-PCIE-40GB for the Reddit-C and Reddit-S datasets. 

\paragraph{Hyperparameter tuning.} has been applied to all models. The hyperparameter tuning was done on the MMD loss between 1000 samples from the model and the validation set. We use a hidden dimension of 32 for all models. Further, we have tuned the learning rate in $\{0.01, 0.001\}$ for all models, the number of mixture components in $\{8, 16\}$ for \modelname{}, {\it RNN} and {\it Transformer}, the number of knots in $\{10, 20\}$ for {\it TriTPP} and the number of attention layers in $\{2, 3\}$ for the transformer baseline.
The values of all other baseline hyperparameters were set to the recommended values given by the authors.
Further, the {\it GD} baseline has been trained with a batch size of 16, as recommended by the authors.
For the forecasting task, we apply the optimal hyperparameters from the density estimation experiment.

\paragraph{Early-stopping.} Each model has been trained for up to 5000 epochs with early stopping on the MMD metric on the validation set for the density estimation task and on the Wasserstein distance metric on the validation set for the forecasting task.

\section{Additional results}\label{app:experiments}

\subsection{Density estimation results with standard deviations}

\begin{table}[!ht]
\centering
\caption{\textbf{Synthetic data}: MMD $(\downarrow)$ between the TPP distribution of sampled sequences and hold-out test set.}
\resizebox{\textwidth}{!}{%
\begin{tabular}{ l|cccccc} 
& Hawkes1 & Hawkes2 & SC & IPP & RP & MRP \\
\hline
RNN &0.02$\pm$0.003&0.01$\pm$0.002&0.08$\pm$0.053&0.05$\pm$0.009&0.01$\pm$0.001&0.03$\pm$0.005\\
Transformer &0.03$\pm$0.011&0.04$\pm$0.017&0.19$\pm$0.006&0.10$\pm$0.034&0.02$\pm$0.007&0.19$\pm$0.048\\
GD  &0.06$\pm$0.004&0.06$\pm$0.002&0.13$\pm$0.004&0.08$\pm$0.002&0.05$\pm$0.002&0.14$\pm$0.008\\
TriTPP &0.03$\pm$0.002&0.04$\pm$0.001&0.23$\pm$0.003&0.04$\pm$0.003&0.02$\pm$0.002&0.05$\pm$0.004\\
\modelname{} (Ours)
&0.02$\pm$0.004&0.02$\pm$0.002&0.19$\pm$0.013&0.03$\pm$0.006&0.02$\pm$0.001&0.10$\pm$0.030
\end{tabular}}
\bigskip
\caption{\textbf{Real-world data}: MMD $(\downarrow)$ between the TPP distribution of sampled sequences and hold-out test set.}
\resizebox{\textwidth}{!}{%
\begin{tabular}{ l|ccccccc} 
& PUBG & Reddit-C & Reddit-S & Taxi & Twitter & Yelp1 & Yelp2 \\
\hline
RNN &0.04$\pm$0.005&0.01$\pm$0.002&0.02$\pm$0.003&0.04$\pm$0.001&0.03$\pm$0.003&0.07$\pm$0.005&0.03$\pm$0.001\\
Transformer &0.06$\pm$0.014&0.05$\pm$0.025&0.09$\pm$0.06&0.09$\pm$0.014&0.08$\pm$0.02&0.12$\pm$0.026&0.14$\pm$0.048\\
GD  &0.11$\pm$0.023&0.03$\pm$0.001&0.03$\pm$0.001&0.1$\pm$0.002&0.15$\pm$0.011&0.12$\pm$0.01&0.1$\pm$0.001\\
TriTPP&0.06$\pm$0.001&0.09$\pm$0.002&0.12$\pm$0.003&0.07$\pm$0.007&0.04$\pm$0.002&0.06$\pm$0.005&0.06$\pm$0.004\\
\modelname{} (Ours)
&0.03$\pm$0.015&0.01$\pm$0.005&0.02$\pm$0.001&0.04$\pm$0.006&0.04$\pm$0.006&0.08$\pm$0.01&0.04$\pm$0.005
\end{tabular}}

\bigskip
\centering

\caption{\textbf{Synthetic data}: Wasserstein distance $(\downarrow)$ between the distribution of the number of events of sampled sequences and hold-out test set.}
\resizebox{\textwidth}{!}{%
\begin{tabular}{ l|cccccc} 
& Hawkes1 & Hawkes2 & SC & IPP & RP & MRP \\
\hline
RNN &0.03$\pm$0.007&0.01$\pm$0.002&0.00$\pm$0.003&0.02$\pm$0.006&0.02$\pm$0.002&0.01$\pm$0.004\\
Transformer &0.06$\pm$0.017&0.04$\pm$0.01&0.06$\pm$0.008&0.07$\pm$0.035&0.04$\pm$0.005&0.11$\pm$0.048\\
GD  &0.16$\pm$0.016&0.13$\pm$0.012&0.5$\pm$0.025&0.42$\pm$0.009&0.28$\pm$0.039&0.5$\pm$0.035\\
TriTPP &0.03$\pm$0.003&0.03$\pm$0.001&0.01$\pm$0.0&0.01$\pm$0.001&0.02$\pm$0.003&0.03$\pm$0.001\\
\modelname{} (Ours) &0.04$\pm$0.009&0.02$\pm$0.006&0.08$\pm$0.018&0.01$\pm$0.003&0.02$\pm$0.001&0.04$\pm$0.006
\end{tabular}}
\bigskip
\caption{\textbf{Real-world data}: Wasserstein distance $(\downarrow)$ between the distribution of the number of events of sampled sequences and hold-out test set.}
\resizebox{\textwidth}{!}{%
\begin{tabular}{ l|ccccccc} 
& PUBG & Reddit-C & Reddit-S & Taxi & Twitter & Yelp1 & Yelp2 \\
\hline
RNN &0.02$\pm$0.004&0.01$\pm$0.004&0.05$\pm$0.013&0.02$\pm$0.002&0.01$\pm$0.001&0.04$\pm$0.004&0.02$\pm$0.002\\
Transformer &0.04$\pm$0.013&0.08$\pm$0.028&0.11$\pm$0.032&0.13$\pm$0.073&0.05$\pm$0.021&0.11$\pm$0.03&0.21$\pm$0.077\\
GD  &0.54$\pm$0.054&0.02$\pm$0.004&0.16$\pm$0.013&0.33$\pm$0.007&0.07$\pm$0.062&0.26$\pm$0.012&0.25$\pm$0.007\\
TriTPP&0.03$\pm$0.003&0.09$\pm$0.001&0.09$\pm$0.001&0.04$\pm$0.001&0.01$\pm$0.001&0.03$\pm$0.006&0.04$\pm$0.002\\
\modelname{} (Ours)
&0.02$\pm$0.009&0.03$\pm$0.007&0.04$\pm$0.002&0.03$\pm$0.007&0.01$\pm$0.004&0.04$\pm$0.006&0.02$\pm$0.006
\end{tabular}}

\bigskip
\centering

\end{table}

\newpage
\subsection{Forecasting results with standard deviations}

\begin{table}[!ht]
\centering
\caption{Wasserstein distance $(\downarrow)$ between forecasted event sequence and ground truth reported for 50 random forecast windows on the test set.}
\resizebox{\textwidth}{!}{%
\begin{tabular}{ l|ccccccc} 
& PUBG & Reddit-C & Reddit-S & Taxi & Twitter & Yelp1 & Yelp2 \\
\hline
Average Seq. Length & 76.5 & 295.7 & 1129.0 & 98.4 & 14.9 & 30.5 & 55.2 \\
\hline
RNN &6.15$\pm$2.53&35.22$\pm$4.02&39.23$\pm$2.06&4.14$\pm$0.25&2.04$\pm$0.08&1.28$\pm$0.03&2.21$\pm$0.06 \\
Transformer &2.45$\pm$0.21&38.77$\pm$10.68&27.52$\pm$5.24&3.12$\pm$0.1&2.09$\pm$0.07&1.29$\pm$0.1&2.64$\pm$0.24\\
GD &5.44$\pm$0.2&44.72$\pm$1.77&64.25$\pm$4.45&4.32$\pm$0.3&2.16$\pm$0.23&1.52$\pm$0.15&4.25$\pm$0.46\\
\modelname{} (Ours) &2.03$\pm$0.01&17.18$\pm$1.18&21.32$\pm$0.42&2.42$\pm$0.03&1.48$\pm$0.03&1.0$\pm$0.02&1.54$\pm$0.04

\end{tabular}}

\bigskip
\centering
\caption{Count MAPE $\times 100\%$ $(\downarrow)$ between forecasted event sequences and ground truth reported for 50 random forecast windows on the test set.}
\resizebox{\textwidth}{!}{%
\begin{tabular}{ l|ccccccc} 

& PUBG & Reddit-C & Reddit-S & Taxi & Twitter & Yelp1 & Yelp2 \\
\hline
Average Seq. Length & 76.5 & 295.7 & 1129.0 & 98.4 & 14.9 & 30.5 & 55.2 \\
\hline
RNN &1.72$\pm$0.65&5.47$\pm$0.92&0.68$\pm$0.07&0.54$\pm$0.02&0.95$\pm$0.08&0.59$\pm$0.02&0.72$\pm$0.03\\
Transformer &0.65$\pm$0.11&7.38$\pm$2.51&0.55$\pm$0.14&0.46$\pm$0.04&1.18$\pm$0.09&0.63$\pm$0.08&0.99$\pm$0.11\\
GD &1.66$\pm$0.06&10.49$\pm$0.42&1.33$\pm$0.12&0.71$\pm$0.05&1.43$\pm$0.2&0.78$\pm$0.1&1.65$\pm$0.2\\
\modelname{} (Ours) 
&0.45$\pm$0.005&1.07$\pm$0.19&0.38$\pm$0.02&0.37$\pm$0.02&0.69$\pm$0.03&0.45$\pm$0.02&0.5$\pm$0.03
\end{tabular}}

\end{table}

\subsection{Sampling runtimes}\label{app:runtimes}
We compare sampling runtimes on an NVIDIA GTX 1080 Ti across the different models in \Figref{fig:runtimes}.
\modelname{} maintains near-constant runtimes by refining the entire sequence in parallel. 
The autoregressive baselines {\it RNN} and {\it Transformer} show increasing runtimes, with longer sequences surpassing \modelname{}'s runtime. 
{\it TriTPP} is a highly optimized baseline computing the autoregressive interactions between event times in parallel by leveraging triangular maps, resulting in the fastest runtimes. 
Lastly, {\it GD} is autoregressive in event time and gradually refines each event time over 100 diffusion steps, leading to the worst runtimes.
\begin{figure}[h]
\centering
  \includegraphics[width=0.7\linewidth]{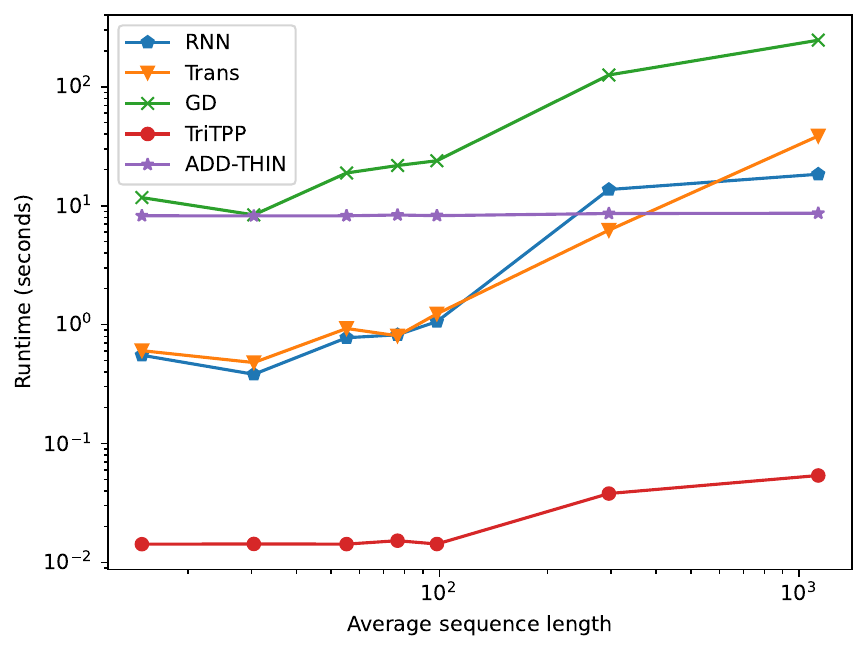}
  \caption{Sampling runtime for a batch of 100 event sequences averaged over 100 runs. We report the trained model's sampling times for the real-world datasets with different sequence lengths (from left to right: Twitter, Yelp 1, Yelp 2, PUBG, Taxi, Reddit-C, Reddit-A). }\label{fig:runtimes}
\end{figure}

\end{document}